\newtheorem{proposition}{Proposition}
\newtheorem{lemma}{Lemma}
\newtheorem{theorem}{Theorem}
\newtheorem{definition}{Definition}
\begin{document}

\title{NEAR: Neighborhood Edge AggregatoR for Graph Classification}
\author[1]{\small Cheolhyeong Kim}
\author[2]{\small Haeseong Moon\thanks{Work done while at POSTECH}}
\author[1]{\small Hyung Ju Hwang\thanks{Corresponding Author}}

\affil[1]{Department of Mathematics, POSTECH}
\affil[2]{Department of Mathematics, University of California, San Diego}
\date{}

\maketitle
\begin{abstract}
Learning graph-structured data with graph neural networks (GNNs) has been recently emerging as an important field because of its wide applicability in bioinformatics, chemoinformatics, social network analysis and data mining. Recent GNN algorithms are based on neural message passing, which enables GNNs to integrate local structures and node features recursively. However, past GNN algorithms based on 1-hop neighborhood neural message passing are exposed to a risk of loss of information on local structures and relationships. In this paper, we propose Neighborhood Edge AggregatoR (NEAR), a framework that aggregates relations between the nodes in the neighborhood via edges. NEAR, which can be orthogonally combined with Graph Isomorphism Network (GIN), gives integrated information that describes which nodes in the neighborhood are connected. Therefore, NEAR can reflect additional information of a local structure of each node beyond the nodes themselves in 1-hop neighborhood. Experimental results on multiple graph classification tasks show that our algorithm makes a good improvement over other existing 1-hop based GNN-based algorithms.
\end{abstract}

\section{Introduction}
Interest in learning graph structured data has risen rapidly in recent years because of its wide applicability in bioinformatics, chemoinformatics, social network analysis and data mining. For learning graph-structured data, we need an algorithm that can effectively represent the graph structure and relations between the graph nodes. In recent years, numerous approaches to learn graph structure were developed, including graph kernel methods \citep{retgk,awl,shervashidze2009efficient,shervashidze2011weisfeiler,borgwardt2005shortest} and "neural message passing"  \citep{gilmer2017neural} based graph neural network (GNN) methods  \citep{diffpool,graphsage,dgcnn,patchysan,kipf2016semi,defferrard2016convolutional,DBLP:conf/icml/XuLTSKJ18,xie2018crystal}. 
    
Most of GNN algorithms aggregate feature information on connected nodes recursively, and thereby create new feature vectors for each node in the graph. By repeating this process, an algorithm gets information about $k$-hop neighborhood of each node and a representation of the whole graph by combining those feature vectors. \cite{xu2018how} formulated this GNN encompassing process mathematically by using a concept of multiset functions.

A major limitation of 1-hop based GNN algorithms is that each node uses only its neighborhood nodes' information, which does not comprise relationships of neighborhood nodes. This limitation causes GNN architectures map different neighborhoods into the same representation, which leads difficulties in learning the graph structure. To illustrate this limitation, we propose a family of artificial graphs that are impossible to classify using the 1-dimensional Weisfeiler-Lehman (WL) test \citep{weisfeiler1968reduction} and Graph Isomorphism Network (GIN); this shows that reflecting relationships between the nodes in the neighborhood is necessary.
    
To overcome this limitation, we propose Neighborhood Edge AggregatoR (NEAR), a simple framework that aggregates edges in a neighborhood of a given node. Our idea was inspired by noticing that certain graph structures that cannot be classified by previous 1-hop based message-passing GNN frameworks. Incorporating NEAR and graph isomorphism network (GIN) \citep{xu2018how} framework, we can combine relationships between the nodes in the neighborhood and the existing node feature vectors. Our proposed algorithm enables the graph representation to reflect the relationships between the nodes in the neighborhood.

Our main contributions can be summarized as follows.

\begin{itemize}
\item
We constructed a family of graphs that cannot be classified by the existing GNN models that are based on 1-hop neighborhood aggregator, thus claiming that reflecting relationships between the nodes in neighborhoods is required to represent their local structure.
\item
We proposed NEAR, a simple framework that aggregates the local structure of neighborhoods which can be used with GIN. We verified that our framework can reflect the local structures of graphs well enough. Performance of GIN has been improved with NEAR.
\item
We proposed simple variants of NEAR, which have a more powerful discriminative power to classify local structures. Our variants of NEAR showed compatible performances on various graph classification tasks \citep{KKMMN2016} among GNN baseline algorithms.
\end{itemize}

\section{Related Works}
\textbf{Graph Neural Network:} Graph neural networks (GNNs) \citep{gori2005, DBLP:journals/tnn/ScarselliGTHM09}, especially emphasizing graph convolutional networks (GCNs) \citep{defferrard2016convolutional, kipf2016semi}, have been widely studied due to their successful results \citep{kipf2016semi, graphsage} in node classification, link prediction, and graph classification tasks. Motivated from these studies, advanced techniques in GCN such as skip-connection \citep{DBLP:conf/icml/XuLTSKJ18}, attention in graph \citep{DBLP:conf/cvpr/MontiBMRSB17, velickovic2018graph}, capsule-GNN \citep{DBLP:conf/iclr/XinyiC19}, graph pooling \citep{diffpool, DBLP:conf/icml/LeeLK19, DBLP:conf/aaai/RanjanST20}, graph generation \citep{molgan, DBLP:conf/nips/YouLYPL18}, graph auto-encoder \citep{DBLP:journals/corr/KipfW16a} are also rapidly emerging and discussed. Applications on drug discovery with drug-drug interaction \citep{PARK2020113538, DBLP:conf/bcb/KarimCJUBD19}, bioinformatics \citep{DBLP:conf/nips/FoutBSB17} and chemoinformatics \citep{C7SC02664A, C8SC04228D, Kojima2020}, and knowledge graphs \citep{DBLP:conf/nips/HamiltonBZJL18, DBLP:conf/nips/Kazemi018, DBLP:conf/www/WangZXLG19} are also widely studied, where the data can be represented in a graph form that includes an interaction between two nodes.\\
\textbf{Weisfeiler-Lehman test:} The evolution of GNN is influential in solving problems of graph theory and its related algorithms, and ideas in graph theory also improve GNN's performance as well. One of the focused algorithms in graph theory which recently arises in GNN literature is a graph isomorphism test. Weisfeiler-Lehman test (WL-test), in general k-dimensional WL test, determines that given two graphs are isomorphic or not. WL test has been firstly proposed by \cite{weisfeiler1968reduction} and widely used nowadays in a form of Weisfeiler-Lehman subtree kernel \citep{wltesting, DBLP:journals/jmlr/ShervashidzeSLMB11}, whereas a family of corner-case counterexamples was discovered by \cite{DBLP:journals/combinatorica/CaiFI92}. In terms of encoding a graph into a vector, the question of whether GNN can distinguish two non-isomorphic graphs has been naturally arisen. \\
\textbf{Relationship between 1-dim WL test and GNN:} Several studies have been proposed to connecting WL test and GNN in recent GNN researches. In Graph isomorphism network (GIN) \citep{xu2018how}, authors formulated their message-passing GNN with multiset functions and compared with 1-dim WL test. Their results claim that the choice of aggregator functions on the nodes in 1-hop neighborhood effects on the capacity of the GNN model, and prove that GIN has an equal performance with 1-dim WL test. Also, there are several attempts to reflect WL test with a higher dimension to GNN algorithm, such as k-GNN \citep{DBLP:conf/aaai/0001RFHLRG19} and disentangled GCN \citep{DBLP:conf/icml/Ma0KW019}.

\section{Proposed method}
\subsection{Preliminaries}
\label{sec2.1:prelim}
GNN's neighborhood aggregator and graph-level readout function operate on a set of feature vectors of nodes, potentially admitting the same feature vectors \citep{xu2018how}. Therefore, we first introduce a generalized concept of sets that allows repetition of elements.

\begin{definition} (Multiset)
A multiset $\mathcal{X}$ is a generalized concept of sets that allows repetition of elements. Multiset $\mathcal{X}$ can be represented as a pair of a set $S \subset \mathbb{R}^n$ and a function $m : S \rightarrow \mathbb{N}$, namely $\mathcal{X}=(S, m)$. $S$ represents a set of unique elements in $\mathcal{X}$ and $m$ represents multiplicities of each element in $S$.
\end{definition}

For example, two multisets $\{a,a,b,b,c\}$ and $\{b,c,a,a,b\}$ can be represented as $(\{a,b,c\}, m)$ with $m(a)=m(b)=2, m(c)=1$. We can easily observe that a multiset is invariant under permutation, because its underlying set and multiplicities remain the same under permutation. Therefore, functions that operate on multisets should be at least permutation invariant to be well-defined. Typical examples of multiset function are count (for finite case), summation (sum), average (mean), and min/max.

According to \cite{xu2018how}, the main structure of message-passing based GNN layer can be formulated using three core functions: AGGREGATOR, COMBINE, and READOUT. Given a graph $G = (V, E)$, suppose that there exist feature vectors on set of nodes with $H=\{h_v|v\in V\}$. Let $h_v$ be a feature vector of node $v\in V$, $h_{N_v}$ be an aggregated feature vector of neighborhood $N_v$ of node $v$, and $h_G$ be a representation vector of graph $G$. In this case, $h_{N_v}, h_G$ and a set of new feature vectors $\Phi(H) = \{h_v^{(new)} | v \in V\}$ can be defined as below.

\begin{align}
h_{N_v} &= AGGREGATOR(\{h_u|u\in N_v\}) \\
h_v^{(new)} &= COMBINE(h_v, h_{N_v}) \\
h_G &= READOUT(\{h_v | v\in V\})
\end{align}

First, AGGREGATOR operates on a set of feature vectors of neighborhood $N_v$ of node $v$. AGGREGATOR integrates information of $N_v$ and returns a feature vector $h_{N_v}$ that represents neighborhood of $v$. Second, COMBINE operates on $h_v$ and $h_{N_v}$ and creates a new feature vector of node in the next GNN layer. While repeating this for every GNN layer, READOUT function operates on the set of nodes in $G$ and returns a vector that represents the whole graph $G$. Summation was used as AGGREGATOR and summation/mean were used as READOUT \citep{xu2018how}. 2-layer MLP with learnable parameters was used as COMBINE to approximate an injective universal multiset function.

Let $H^{(k)} = \{h_v^{(k)} | v \in V\}$ be a multiset of feature vectors of nodes in $k{th}$ GNN layer $\Phi^{(k)}$, where $H^{(0)} = \{h_v^{(0)} | v \in V\}$ be a multiset of the given initial feature vectors of nodes. Because GNN layers are stacked in a row, every layer computes its new feature vectors recursively: $H^{(k)} = \Phi^{(k)}(H^{(k-1)})$ for $k \geq 1$. By stacking $k$ GNN layers in a row, we can expect that the model can learn up to $k$-hop neighborhood's representation.

\subsection{Toy example}
\label{sec2.2:toyex}

Summing AGGREGATOR in GIN aggregates the information on neighborhood's size and distribution \citep{xu2018how}. However, besides GIN, the current GNNs only with any simple 1-hop AGGREGATOR (regardless of sum/mean) may misclassify sets with different local structures. 

Here, we introduce a family of graphs that cannot be distinguished by GIN. Figure \ref{toyex} is an example of graphs that have the same neighborhood set with different local structures. 

\begin{lemma}
\label{lemma0}
There exists a graph $G=(V,E)$ with a multiset of feature vectors of nodes $H = \{h_v | v\in V\} = (\{h_w, h_b\}, m_N)$ with $m_N(h_b)=m_N(h_w)=2N$, satisfying the following conditions for every $N \in \mathbb{N}$.
\begin{itemize}
\item $G = (V,E)$ where $|V| = 4N$ and $|E| = 5N$. 
\item There are $2N$ black nodes and $2N$ white nodes.
\item Every white node is connected with two black nodes and has the same feature vector $h_w$.
\item Every black node is connected with two white nodes and one black node, and has the same feature vector $h_b$.
\end{itemize}
\end{lemma}

\begin{proof}
Let $V_B = \{1,2,\cdots,2N-1,2N\}$ be a set of black nodes and $V_W = \{2N+1,2N+2,\cdots,4N\}$ be a set of white nodes, where two black nodes $2k-1$ and $2k$ are connected for $1\leq k \leq N$. We define \textit{a multiset of black nodes \textbf{with multiplicities 2}} as $\mathcal{B} = \{1,1,2,2,3,3,\cdots,2N,2N\}$. 

It is enough to show that the multiset $\mathcal{B}$ can be \textbf{partitioned} into $2N$ sets $B_{2N+1}, B_{2N+2}, ..., B_{4N}$, each set $B_j$ comprises \textbf{two distinct black node elements}. If such partition exists, then the only remaining part is connecting each white node $j$ to two black nodes $j_1, j_2$ of $B_j = \{j_1, j_2\}$. 

It is clear that each white node $j$ is only connected with two black nodes $\{j_1, j_2\}$ above. In the perspective of a black node $i$, every $i$ is contained in two partitioned sets $B_{i_1}, B_{i_2}$ with $i_1 \neq i_2$, since $\mathcal{B} =  \{1,1,2,2,3,3,\cdots,2N,2N\}$ is partitioned into $2N$ sets $B_{2N+1}, ..., B_{4N}$. Thus, each black node $i$ is connected with two white nodes $i_1, i_2$ and one black node $i^*$, where $i^* = 2k$ if $i = 2k-1$ and $i^* = 2k-1$ if $i=2k$. \\

For $2N+1 \leq j \leq 4N-2$, if $|\mathcal{B}| > 4$, we randomly pick 2 different elements $j_1, j_2$ in $\mathcal{B}$ and define $B_j = \{j_1, j_2\}$. After then, we remove $j_1, j_2$ from the multiset $\mathcal{B}$. Next, we connect them with the given white node $j$. Repeating this procedure, we get 2 remaining white nodes $\{4N-1, 4N\}$ and 4 black node elements in $\mathcal{B}$.

If $|\mathcal{B}| = 4$, then we have three possible cases. 

\begin{itemize}
    \item Without loss of generality, if $\mathcal{B} = \{p,p,q,q\}$, then we pick $\{p,q\}$ and connect them with the white node $4N-1$. For the white node $4N$, we connect it to black nodes $p, q$.
    \item Without loss of generality, if $\mathcal{B} = \{p,p,q,r\}$, then we pick $\{p,q\}$ and connect them with the white node $4N-1$. For the white node $4N$, we connect it to black nodes $p, r$.
    \item If all elements in $\mathcal{B}$ are distinct, then we choose 2 elements randomly and connect them to the white node $4N-1$. The remaining elements will be connected with the white node $4N$.
\end{itemize}

Finishing the procedures above, every white node is connected with two black nodes and every black node is connected with two white nodes and one black node, which generates the desired graph. This completes the proof.
\end{proof}

\begin{figure}[htb]
  \centering
  \includegraphics[width=0.85\linewidth]{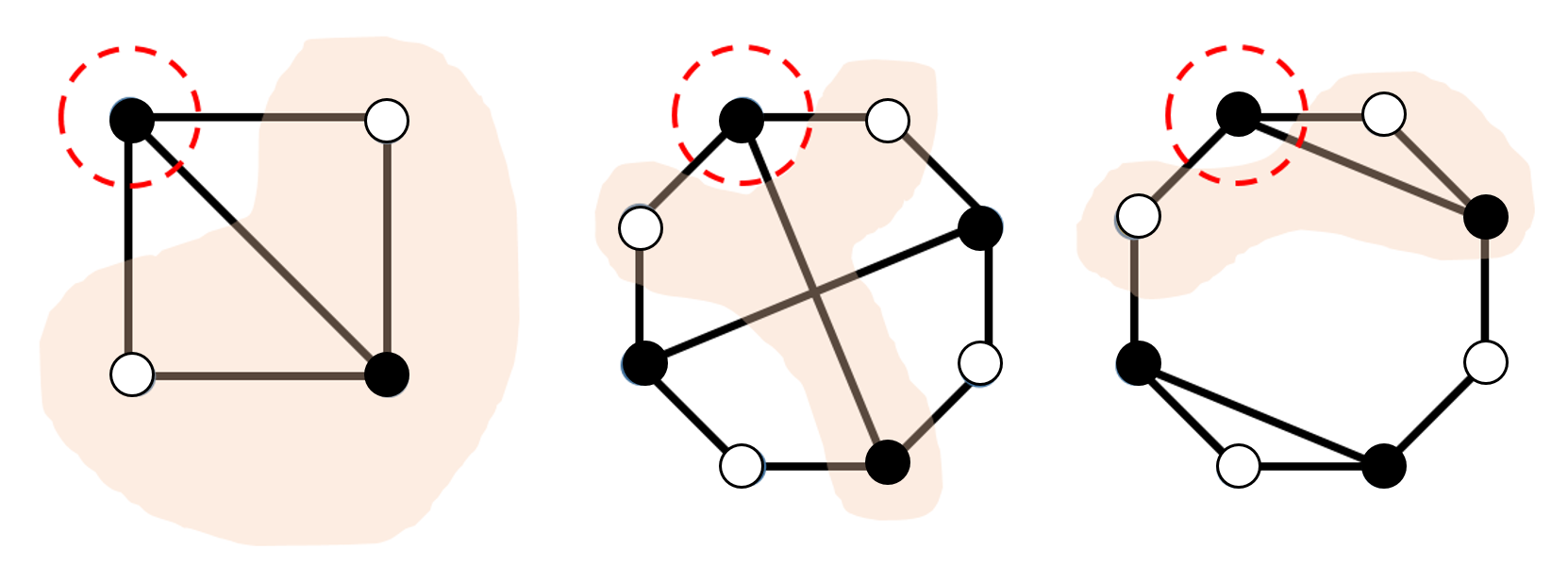}
  \caption{Examples that satisfy the conditions in lemma \ref{lemma0}. Every black node is connected with two white nodes and one black node. Thus, GNN layer will map the same vector on every black node in the next layer. However, in the shaded area, all black nodes have different local structures.}
  \label{toyex}
\end{figure}

Suppose black nodes have its feature vector $h_b$ and white nodes have its feature vector $h_w$. GIN aggregates neighborhood of target node $v$ and combine it with feature vector of $v$. These only take into account the combination of feature vectors of the nodes in $N_v$, not the relationship between the nodes in $N_v$. Therefore, although marked black nodes in Fig. \ref{toyex} have different local structures, they will be mapped to the same feature vector in the next GNN layer. The following proposition is a general statement for graphs in Fig. \ref{toyex}.

\begin{proposition}
\label{lem1}
Let $G$ be a graph that satisfies the conditions in lemma \ref{lemma0} with its feature vector multiset $H$. For a GNN layer $\Phi$ with 1-hop neighborhood AGGREGATOR and COMBINE, $G$ with a new feature vector set $\Phi(H)$ also satisfies the conditions in lemma \ref{lemma0}.
\end{proposition}

\begin{proof}
1-hop neighborhood AGGREGATOR and COMBINE do not modify the graph structure. Therefore, it is sufficient to prove that all nodes with degree 2 and degree 3 have the same feature vector mapped by the GNN layer respectively. Let $h_w$ be a feature vector of the white node and $h_b$ be a feature vector of the black node. Then, a new feature vector for the white node can be represented as below.

\begin{align}
&h_w^{(new)} = COMBINE(h_w, AGGREGATOR(\{h_b,h_b\}))
\end{align}

Because every white node has the neighborhood with the same feature vectors $\{h_b,h_b\}$, this can be applied to all white nodes in $G$, regardless of $COMBINE$ function or $AGGREGATOR$ function. Therefore, all white nodes are mapped to the same feature vector $h_w^{(new)}$. For black nodes, this can be similarly proved from the fact

\begin{align}
&h_b^{(new)} = COMBINE(h_b, AGGREGATOR(\{h_b,h_w,h_w\}))
\end{align}

\end{proof}

Every black node in such graph $G$ has the same feature vector, degree, and multiset of feature vectors of neighborhood nodes. Therefore, every black node will be mapped to the same feature vector in the next GNN layer. Likewise, every white node will be mapped onto the same feature vector. Repeating these procedures for every GNN layer with our proposed family of graphs, we obtain the following theorem which states that GIN may fail to catch differences in the local structures.

\begin{theorem}
For a family $\mathcal{G}$ of graphs introduced in lemma \ref{lemma0}, suppose a GNN model contains $K$ GNN layers, which contain 1-hop neighborhood AGGREGATOR and COMBINE. Let a vector $h_G^{(k)} = READOUT(\{h_v^{(k)}|v\in V\})$ be a representation vector of graph $G$ in $k^{th}$ GNN layer. Define a representation vector of graph $h^{(rep)}_G$ as $h^{(rep)}_G = CONCAT(h_G^{(0)},h_G^{(1)},...,h_G^{(K)})$, then
\begin{itemize}
    \item If mean is used as a READOUT, then GNN model maps every graph $G \in \mathcal{G}$ to the same vector, regardless of AGGREGATOR and COMBINE, i.e. $h^{(rep)}_G$ is constant for every graph $G \in \mathcal{G}$.
    \item In general, for any READOUT function, $h^{(rep)}_G$ is only dependent on $|V|$, where $|V|$ is the number of nodes in a graph $G$.
\end{itemize}
\label{thm1}
\end{theorem}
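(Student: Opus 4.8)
The plan is to iterate Lemma \ref{lem1} through all $K$ layers and then analyze how each READOUT acts on the resulting two-valued feature multiset. The starting observation is that the initial multiset $H^{(0)}$ satisfies the conditions of Proposition \ref{prop1}, so applying Lemma \ref{lem1} once per layer gives, by induction on $k$, that $H^{(k)}$ also satisfies those conditions for every $0 \le k \le K$. Concretely, at each layer there are exactly two feature values---one shared by all $2N$ white (degree-$2$) nodes and one shared by all $2N$ black (degree-$3$) nodes---which I would denote $h_w^{(k)}$ and $h_b^{(k)}$.

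The key point I want to extract from this induction is that $h_w^{(k)}$ and $h_b^{(k)}$ are determined entirely by the recursions
\begin{align}
h_w^{(k)} &= COMBINE\big(h_w^{(k-1)}, AGGREGATOR(\{h_b^{(k-1)}, h_b^{(k-1)}\})\big), \\
h_b^{(k)} &= COMBINE\big(h_b^{(k-1)}, AGGREGATOR(\{h_b^{(k-1)}, h_w^{(k-1)}, h_w^{(k-1)}\})\big),
\end{align}
with $h_w^{(0)} = h_w$ and $h_b^{(0)} = h_b$. Since these recursions reference only the initial features and the neighborhood feature multisets---which are identical for every graph in $\mathcal{G}$ by Proposition \ref{prop1}---the values $h_w^{(k)}, h_b^{(k)}$ depend only on $k$ and on the fixed layer functions, and \emph{not} on $N$ nor on the particular random choices made during the construction of $G$. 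This graph-independence of the per-color feature values is the crux of the argument.

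For the mean-READOUT case I would then simply read off the layer-$k$ graph vector. Because the multiset $\{h_v^{(k)} \mid v \in V\}$ consists of $2N$ copies of $h_w^{(k)}$ and $2N$ copies of $h_b^{(k)}$, its mean equals $\tfrac{1}{2}\big(h_w^{(k)} + h_b^{(k)}\big)$, in which the factor $2N$ cancels. Thus $h_G^{(k)}$ is independent of $N$ and of the graph, so the concatenation $h^{(rep)}_G$ is the same constant vector for every $G \in \mathcal{G}$, establishing the first bullet.

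For a general READOUT I would use only that it is a multiset (permutation-invariant) function. The layer-$k$ readout multiset is $(\{h_w^{(k)}, h_b^{(k)}\}, m)$ with $m(h_w^{(k)}) = m(h_b^{(k)}) = 2N$; since the two feature values are fixed across $\mathcal{G}$, this multiset is a function of $N$ alone, equivalently of $|V| = 4N$. Hence each $h_G^{(k)}$, and therefore $h^{(rep)}_G$, depends only on $|V|$, giving the second bullet. The main obstacle is not any single computation but rather pinning down the graph-independence claim cleanly; once that is in hand, both bullet points follow immediately by inspecting the two-valued readout multiset.
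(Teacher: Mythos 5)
Your proposal is correct and follows essentially the same route as the paper's own proof: induct on Lemma \ref{lem1} to keep the two-valued feature multiset at every layer, observe that the recursion for $h_w^{(k)}, h_b^{(k)}$ depends only on the fixed initial features and layer functions (not on $N$ or the construction's random choices), then read off the mean case via the $2N$-cancellation and the general case via the readout multiset being a function of $|V|$ alone. No gaps; the paper merely packages the same argument in the notation $F^{(k)} = \Phi^{(k)} \circ \cdots \circ \Phi^{(1)}$ and $h_G^{(rep)} = \mathcal{F}(h_b^{(0)}, h_w^{(0)}, 2N)$.
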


\begin{proof}
Let $G^{(0)}$ be an original graph $G$ with its node feature vectors and $G^{(k)}$ be a graph $G$ that passed $k$ GNN layers. From the Proposition \ref{lem1}, we can deduce that $G^{(k)}$ also satisfies the conditions in lemma \ref{lemma0} inductively. Therefore, there are $2N$ black nodes with degree 3 and $2N$ white nodes with degree 2 which have the same feature vector respectively in $k^{th}$ graph $G^{(k)}$. Let $h_b^{(k)}$, $h_w^{(k)}$ be the feature vectors of black/white nodes in $G^{(k)}$ respectively. Then,
\begin{align}
h_G^{(k)} &= READOUT(\{\underbrace{h_b^{(k)},\cdots,h_b^{(k)}}_\text{2N times},\underbrace{h_w^{(k)},\cdots,h_w^{(k)}}_\text{2N times}\}) \\ &= \Psi(h_b^{(k)},h_w^{(k)},2N)
\end{align}
Note that
\begin{align}
h_b^{(k)}, h_w^{(k)} &= \Phi^{(k)}(h_b^{(k-1)},h_w^{(k-1)}) = \cdots \\
&= \Phi^{(k)} \circ \cdots \circ \Phi^{(1)} (h_b^{(0)},h_w^{(0)})
\end{align}

Let $F^{(k)}=\Phi^{(k)} \circ \cdots \circ \Phi^{(1)}$, then $h_G^{(rep)}$ can be represented as below.

\begin{align}
    h_G^{(rep)} &= [h_G^{(0)},h_G^{(1)},\cdots,h_G^{(K)}] \\ &= [\Psi(h_b^{(0)},h_w^{(0)},2N),\cdots,\Psi(h_b^{(K)},h_w^{(K)},2N)] \\
    &= [\Psi(h_b^{(0)},h_w^{(0)},2N),\cdots,\Psi(F^{(K)}(h_b^{(0)},h_w^{(0)}),2N)] \\
    &= \mathcal{F}(h_b^{(0)},h_w^{(0)},2N)
\end{align}
Note that $h_b^{(0)},h_w^{(0)}$ are fixed for all graphs $G\in\mathcal{G}$. If READOUT function $\Psi$ is independent of the size of set $|V|=4N$, then $\Psi(h_b^{(k)},h_w^{(k)},2N) = \Psi(h_b^{(k)},h_w^{(k)})$ and $h_G^{(rep)}$ is also independent with $2N$. This immediately proves the first statement, because 

\begin{align}
MEAN(\{\underbrace{h_b^{(k)},\cdots,h_b^{(k)}}_\text{2N times},\underbrace{h_w^{(k)},\cdots,h_w^{(k)}}_\text{2N times}\}) = \frac{h_b^{(k)} + h_w^{(k)}}{2}
\end{align} 
is independent of $N$. In general, $h_G^{(rep)}$ is only dependent on $N = \frac{1}{4}|V|$, which directly proves the second statement. 
\end{proof}

Theorem \ref{thm1} illustrates that previous GIN model may miss valuable information on local structures in such cases illustrated in Fig.  \ref{toyex}. Not only the corner cases, this leads several 1-hop based GNN models to have lower model capacities. Therefore, the problem is to find a new graph neural network framework that can distinguish such differences.

\subsection{Neighborhood Edge Aggregator}
\label{sec2.3:near}
Here, we propose NEAR, a new GNN framework that aggregates information of a neighborhood via edges in a 1-hop neighborhood $N_v$ of a node $v$. We aim to obtain local structural properties by inserting an additional neighborhood edge aggregator. While aggregating and feed-forwarding the node's feature vector in every GNN layer, NEAR encodes the edges in $N_v$ and passes the information to the next layer. The definition below states an edge-aggregating process in our proposed algorithm NEAR in an abstract form.
\begin{definition}
Let $G = (V, E)$ be a graph with a multiset of node feature vectors $H = \{h_v |v\in V\}\subset\mathbb{R}^n$. Let $N_v$ be a set of nodes in neighborhood of $v\in V$ and $E_{N_v}$ be a set of edges that connect nodes in $N_v$. Suppose that $g : \mathbb{R}^{n} \times \mathbb{R}^{n} \rightarrow \mathbb{R}^{c}$ is a real-valued function, where $n$ is the dimension of feature vector of nodes and $c$ is the dimension of the embedded vectors. Let $\phi$ be a fixed multiset function. $NEAR_{g,\phi}$, which operates on graph $G$, node $v$, and feature vector multiset $H$, is defined as below.
\begin{align}
NEAR_{g,\phi}(G, v, H) = \phi(\{g(h_u,h_z) | u, z, \in N_v, (u,z) \in E_{N_v}\})
\end{align}
\end{definition}
We set $\phi$ as a summation in our work, which was motivated from the summation AGGREGATOR in \cite{xu2018how}. Then, $NEAR_{g,sum}(G, v, H)$ can be rewritten as below, where $e_{uz}$ is an adjacency matrix's element. In order to simplify the notation, we will state $NEAR_{g,sum}(G, v, H)$ as $NEAR_g(N_v, H)$.
\begin{align}
NEAR_{g,sum}(N_v, H) &= \sum_{uz \in E_{N_v}} g(h_u,h_z) \\&= \sum_{u, z \in N_v} e_{uz} g(h_u,h_z)
\end{align}

For a given node $v$, we add the feature vector $h_v^{(k)}$ and the aggregated neighborhood feature vector $h_{N_v}^{(k)}$, where $h_{N_v}^{(k)}$ is calculated by 1-hop neighborhood AGGREGATOR. In NEAR, edges in $N_v$ are additionally aggregated and mapped to $h_{NE_v}^{(k)}$, which is shown with bold edges. After then, two vectors $h_{v}^{(k)}+h_{N_v}^{(k)}$ and $h_{NE_v}^{(k)}$ are concatenated and mapped to a new feature vector of $v$ in the next GNN layer by COMBINE (MLP in Fig. \ref{fig:near}). These procedures will be done for every node in graph $G$. 

\begin{figure}[htb!]
    \centering
    \includegraphics[width=0.9\linewidth]{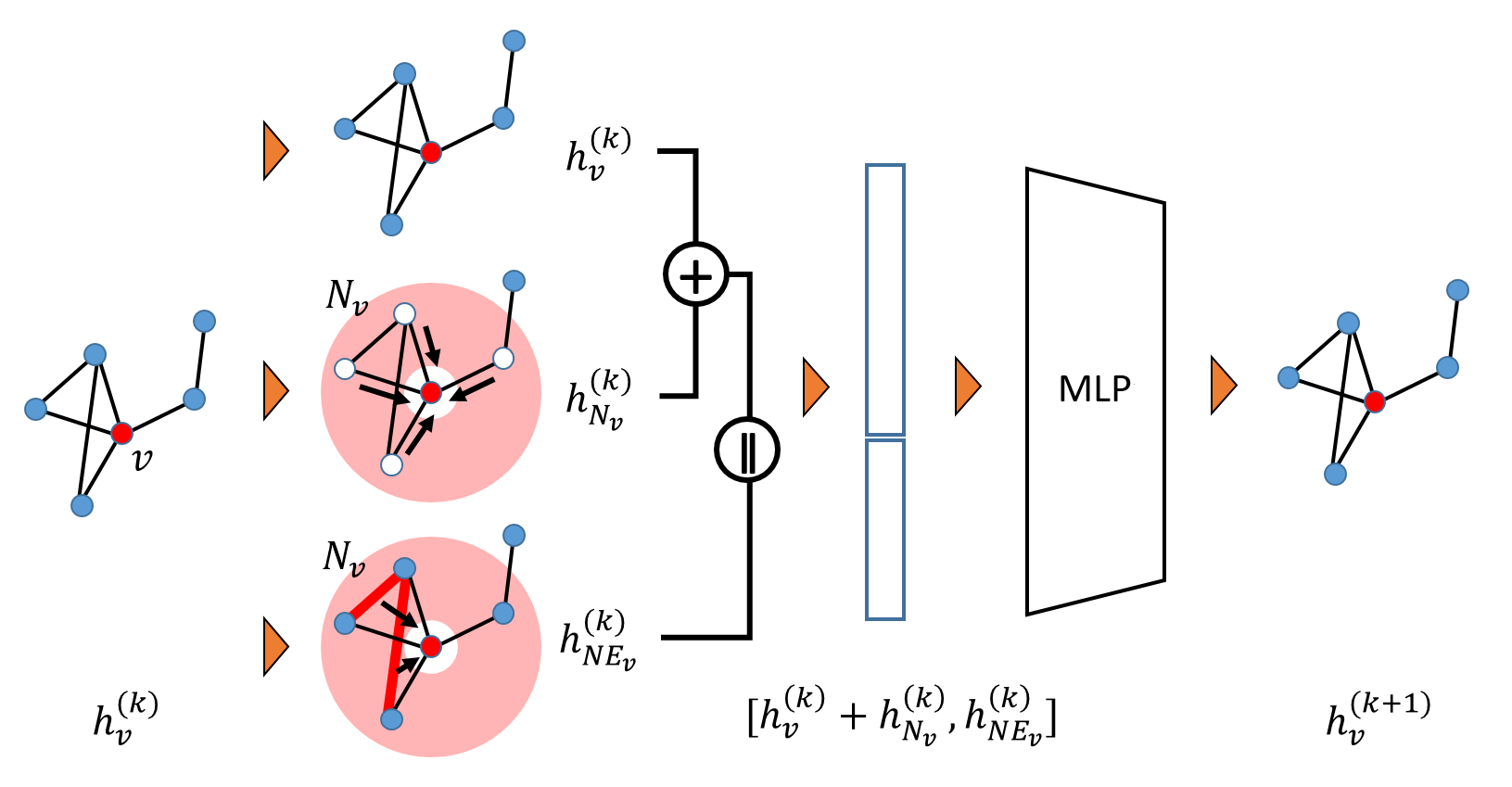}
    \caption{Brief sketch of GNN layer with GIN-0 and NEAR, which is also used in our experiments. GIN-0 is illustrated with $h_v^{(k)}$ and $h_{N_v}^{(k)}$ and NEAR is illustrated at the bottom with $h_{NE_v}^{(k)}$.}
    \label{fig:near}
\end{figure}

Note that $g(h_z, h_u)$ is sufficient to encode the connection between two nodes $z, w$ whose feature vectors are $h_z, h_u$. Once we define NEAR that maps the connection between the nodes in neighborhoods onto some hidden vector with size $c$, we can re-design the previous GIN architecture using our proposed method. NEAR can encode the local structure into every GNN layer.
\begin{align}
h^{(k)}_{N_v} &= AGGREGATOR(\{h_u^{(k)}|u\in N_v\}) \\
h^{(k)}_{NE_v} &= NEAR_g(N_v, H^{(k)}) \nonumber \\&= \sum_{u, z\in N_v} e_{uz} g(h^{(k)}_u,h^{(k)}_z) \\
h_v^{(k+1)} &= COMBINE(h_v^{(k)}, h_{N_v}^{(k)}, h_{NE_v}^{(k)}) \notag \\ &= MLP^{(k)}(CONCAT(h_v^{(k)} + h_{N_v}^{(k)}, h_{NE_v}^{(k)})) \\
h_G^{(k)} &= READOUT(\{h^{(k)}_v | v\in G\}) \\
h_G^{(rep)} &= CONCAT(\{ h_G^{(k)} | 0\leq k \leq K\})
\end{align}

\subsection{Proposed Variants of NEAR, Computational Cost, and Related Discussions}
We propose four simple variants of NEAR: NEAR-c, NEAR-e, NEAR-m, and NEAR-h. NEAR-c uses the simplest constant function $g_c(h_i, h_j) = 1$ and NEAR-e uses a simple addition function $g_e(h_i, h_j) = h_i + h_j$. NEAR-m uses an element-wise max function and NEAR-h uses the  Hadamard product. However, aggregating neighborhood edge information with a naive summation may cause heavy computational cost on large and dense graphs, while evaluating the term $\sum_{uz \in E_{N_v}} g(h_u,h_z) = \sum_{u, z \in N_v} e_{uz} g(h_u,h_z)$ over $E_{N_v}$ or $N_v \times N_v$. \\

For NEAR-c and NEAR-e, we can simplify our computation as below by incorporating and reformulating our NEAR term with local graph invariants. $d_i|_{N_v}$ is the number of nodes in $N_v$ that are connected with node $i$, which is equal to the number of triangles that contain node $i$ and node $j$.

\begin{align}
NEAR_c(N_v, H) &= \sum_{E_{N_v}} e_{ij} = |E_{N_v}| \\
NEAR_e(N_v, H) &= \sum_{E_{N_v}} h_i + h_j = \sum_{i\in N_v} d_i|_{N_v} h_i
\end{align}

These reformulations of NEAR-c and NEAR-e significantly reduce the time-complexity of calculating the NEAR term. That is, NEAR-c and NEAR-e compute  its summation $h_{NE_v}$ over $N_v$ (a set of neighborhood nodes of $v$) with equation (23) and (24), which can be computed simultaneously with $h_{N_v}$. These make its computational cost almost the same as GIN. Empirically, under the same model settings in our experiments, GIN, GIN with NEAR-c, and GIN with NEAR-e take a similar time per epoch, whereas NEAR-m and NEAR-h take 10x $\sim$ 20x times per epoch.\\

\subsubsection{Possible Future Works} We mainly focused on a particular form of NEAR with a summation multiset function and four simple functions $g$ that represent connections in a neighborhood, namely $NEAR_{g,sum}=NEAR-c/e/m/h$. Other variations of NEAR with different edge aggregator multiset function $\phi$ and edge feature map $g$ can be discovered in future works. Also, their reformulation with graph invariants can be further discussed with its computational cost reduction issue. One possible solution to handle the equation (19) is to approximate the neighborhood edge aggregation term by sampling. As an example, a sampling-based method as in GraphSAGE \citep{graphsage} can be considered, which will also be an interesting future work that can improve our proposed method with GIN.

\subsection{How powerful is NEAR?: Its limitation and a comparison with 3-WL test} It is known that GIN is as powerful as 1-WL test with an injective neighborhood node aggregator and an injective graph-level readout function. \citep{xu2018how} Clearly, 1-WL test and GIN cannot distinguish the proposed family of graphs as in Fig. \ref{toyex}. By collecting the information of edges $e\in E_{N_v}$ in a local neighborhood $N_v$, at least our proposed algorithm NEAR can distinguish the examples in Fig. \ref{toyex}. These graphs empirically show that NEAR is strictly more powerful than GIN, and surely increases the model capacity of GIN by incorporating the information of local neighborhood edges, equivalently the information of triangles.

From the perspective of a graph isomorphism as a benchmark of discriminative power of the model, two questions on the discriminative power of NEAR naturally arise.
\begin{itemize}
\item Is there an example that NEAR cannot distinguish two non-isomorphic graphs?
\item How powerful is NEAR aggregator compared with a higher-order WL test?
\end{itemize}

An encoded hidden vector passed through GIN with NEAR as in Fig. \ref{fig:near} can be rewritten as below:
\begin{align}
h_v^{(k+1)} &= COMBINE(h_v^{(k)},h_{N_v}^{(k)},h_{NE_v}^{(k)}) = MLP^{(k)}(h_v^{(k)}+h_{N_v}^{(k)},h_{NE_v}^{(k)}) \\
&= MLP^{(k)}(h_v^{(k)}+h_{N_v}^{(k)},\phi(\{g(h_u,h_z) | u, z, v \text{ are connected}\})
\end{align}

Note that GIN with NEAR aggregates the information of triangles that contain $v$ to $h_v^{(k+1)}$, from the equation (26). Therefore, we can consider the simplest counterexample: two non-isomorphic graphs that are not distinguishable with 1-WL test and do not contain triangles. They cannot be distinguished by NEAR since any informative information does not appear on a NEAR term $h_{NE_v}=0$. Also, from the fact that 3-WL test is executed over 3-tuples of vertices, we can deduce that if two graphs are not distinguishable with 3-WL test, then NEAR also cannot distinguish them. That is, GIN with NEAR is strictly more powerful than 1-WL test, but less powerful than 3-WL test.

\section{Experiments}
\label{sec3:exp}
We conduct two experiments to show the importance of relations between the neighborhoods and obtain a compatible performance over the existing GIN. In the experiments, we constructed GIN-0 based models with our variants of NEAR. Firstly, we performed two graph classification tasks with the toy examples in Section \ref{sec2.2:toyex}. Each task requires us to classify several graph properties. Secondly, we performed the graph classification for 9 benchmark datasets and our two toy example tasks. 10 times of 10-fold cross-validation was applied, and mean and standard deviation of the test accuracies were reported. Overall, our proposed model improved the previous results of GIN and other GNN-based models on several graph classification benchmarks. 

\subsection{Model Configuration of GIN}
\label{sec3.1:model}
Following the model configuration of \cite{xu2018how}, we impelemented GIN and NEAR variants based on the GIN-0. The GIN model in our experiments has 5 GNN layers stacked in a row; each of them has \textbf{sum} AGGREGATOR, \textbf{sum} READOUT and 2 fully-connected layers as COMBINE \citep{xu2018how}. COMBINE function of our baseline GIN-0 model is given by $h_v^{(k+1)} = MLP^{(k)}(CONCAT(h_v^{(k)} + h_{N_v}^{(k)}, h_{NE_v}^{(k)}))$. After generating $h_G^{(rep)}$, this is feed-forwarded into 2-layer MLP with ReLU activation function and softmax function to obtain a probability vector. The number of batch size and the dimension of hidden layer are both given by 64. Batch normalization \citep{Ioffe:2015:BNA:3045118.3045167} is applied after every hidden layer and dropout \citep{JMLR:v15:srivastava14a} ratio for the final prediction layer is given by $0.5$ \citep{xu2018how}. We used Adam optimizer \citep{DBLP:journals/corr/KingmaB14} with its learning rate $10^{-4}$ for toy-example plots in Section \ref{sec3.2:toyexp} and $10^{-2}$ for benchmark evaluations in Section \ref{sec3.3:graphbenchmark}, with exponential decay $0.99$. Cross entropy is used as a loss function. The detailed structure is illustrated in Fig. \ref{fig:model}.

\begin{figure}[htb]
    \centering
    \includegraphics[width=0.99\linewidth]{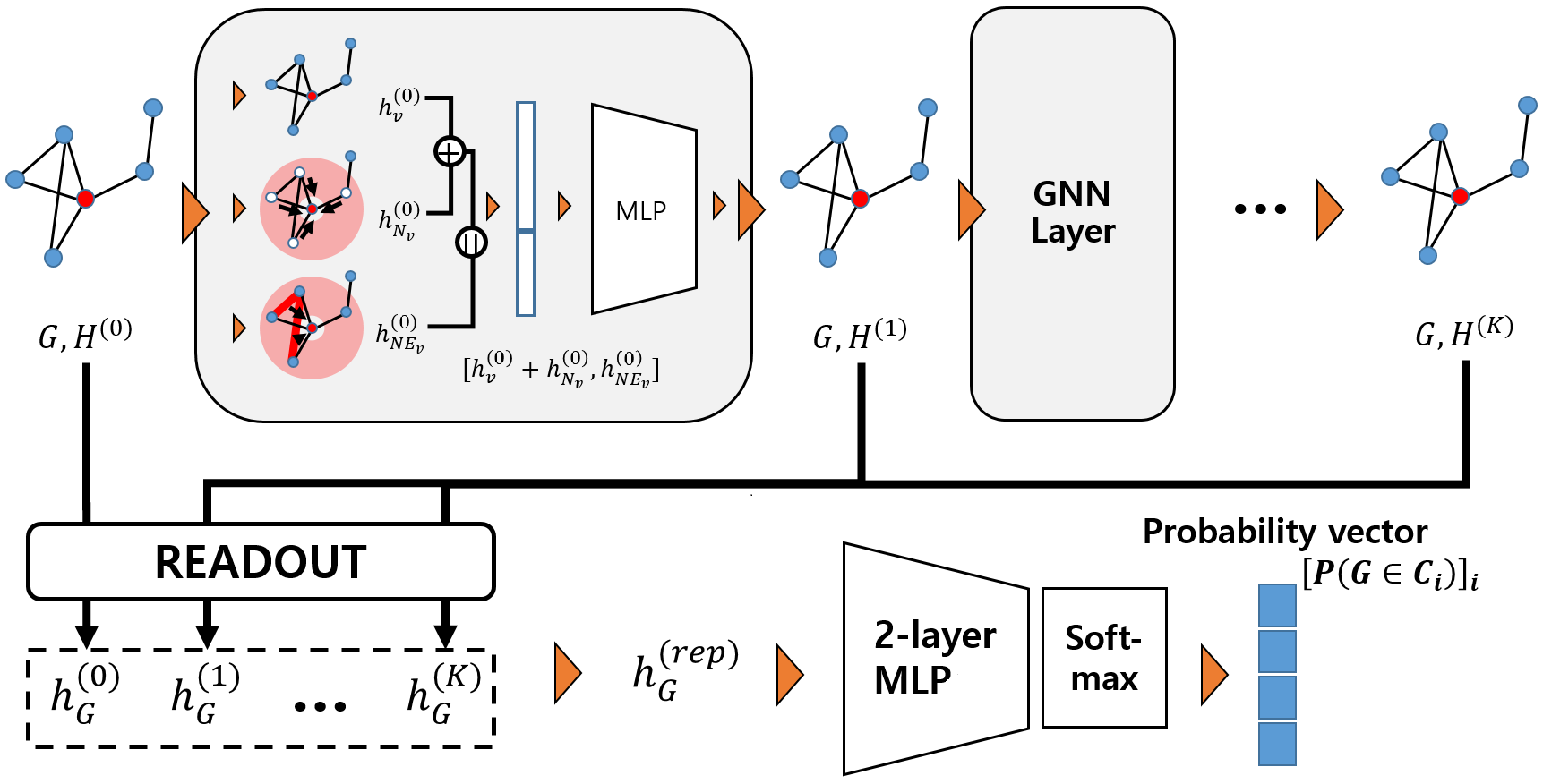}
    \caption{GNN model in our experiment. GIN-0 and NEAR are combined and aggregate neighborhoods' information recursively. READOUT extracts the graph representation vector $h_G^{(k)}$ of each graph $G, H^{(k)}$ and combines it into $h_G^{(rep)}$. This vector is put into 2-layer multilayer perceptron (MLP) classifier with a softmax activation and the model returns the probability vector.}
    \label{fig:model}
\end{figure}

\subsection{Toy example}
\label{sec3.2:toyexp}
In this experiment, we aim to show that NEAR can encode relations between the nodes in the neighborhoods. Comparing it with a plain GIN model and its variants, we can deduce that considering local structures to GNN layers is strongly required. We perform two graph classification tasks with the family of graphs introduced in Section \ref{sec2.2:toyex}, which was proven to be indistinguishable by GIN. Firstly, we generate 5000 artificial graphs that satisfy lemma \ref{lemma0}. Each graph has its node labels based on lemma \ref{lemma0}. We labeled the graphs into 5 balanced classes upon the clustering coefficient (ARTFCC) and the length of the longest simple cycle in the cycle basis (ARTFCY). 

We performed two graph classification tasks for the graphs above with a plain GIN model (GIN-0 in \cite{xu2018how} with sum AGGREGATOR) and 4 simple variants of NEAR with GIN-0: GIN-0 with NEAR-c/e/m/h. We report the loss curves, accuracies, and the entropy of predicted graph labels for fixed train/validation splits on ARTFCC/ARTFCY datasets in Fig. \ref{toyex31cc} and Fig. \ref{toyex31cy}. An entropy of predicted labels is reported to emphasize the fact that GIN with sum READOUT and ReLU activation maps almost all of graphs in the validation set to the same label.

\begin{figure*}[htb]
  \centering
  \includegraphics[width=0.485\linewidth]{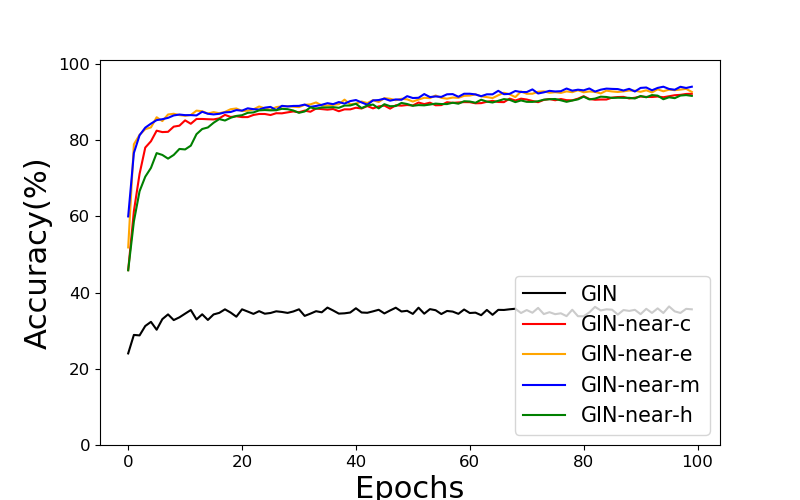}
  \includegraphics[width=0.485\linewidth]{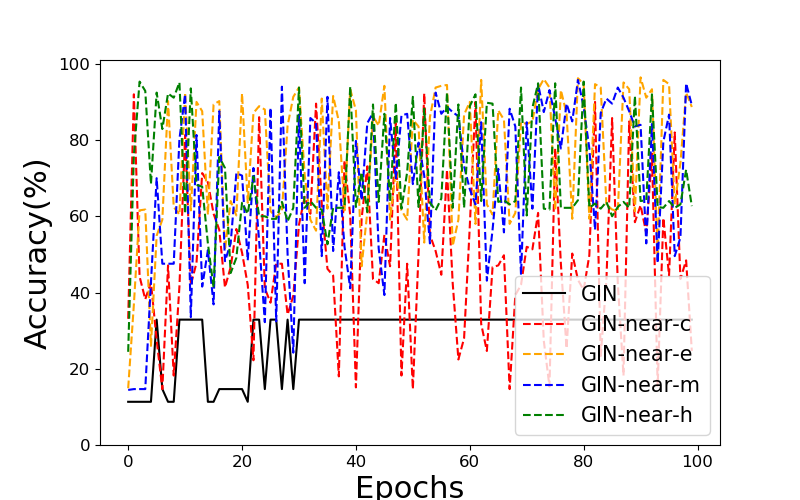}
  \\
  \includegraphics[width=0.485\linewidth]{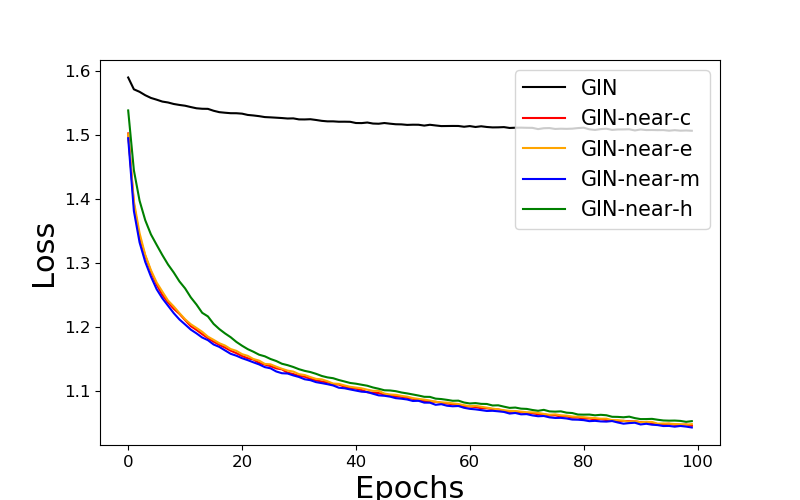}
  \includegraphics[width=0.485\linewidth]{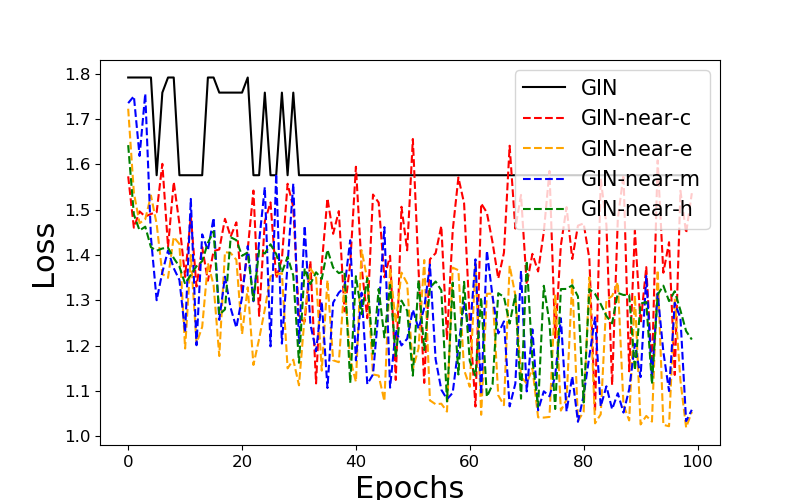}
  \\
  \includegraphics[width=0.485\linewidth]{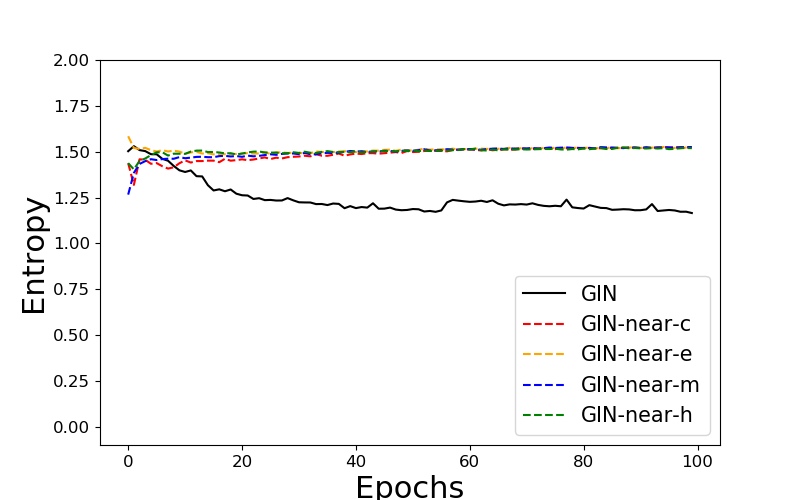}
  \includegraphics[width=0.485\linewidth]{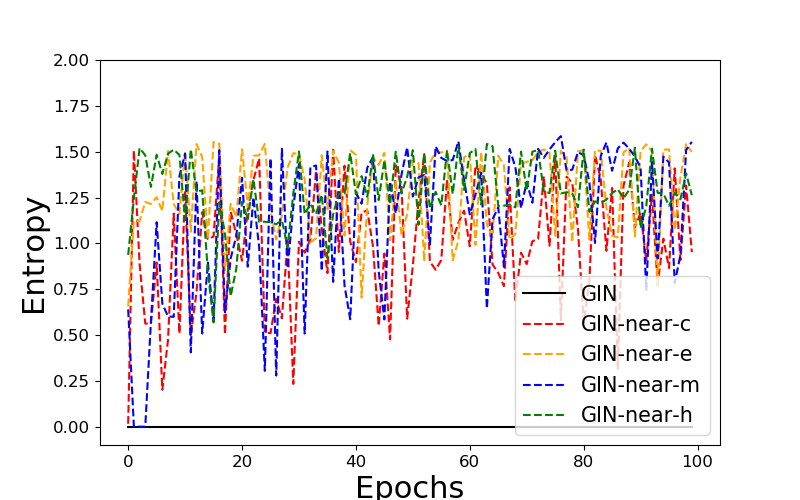}
  \\
  
  \caption{Accuracy, loss, and entropy of the predicted labels for training set (on the first column) and validation set (on the second column) of ARTFCC. GIN-0 is marked as black line, and GIN-NEAR-c/e/m/h are marked as dashed red/orange/blue/green lines.}
  \label{toyex31cc}
\end{figure*}

\begin{figure*}[htb]
  \centering
  \includegraphics[width=0.485\linewidth]{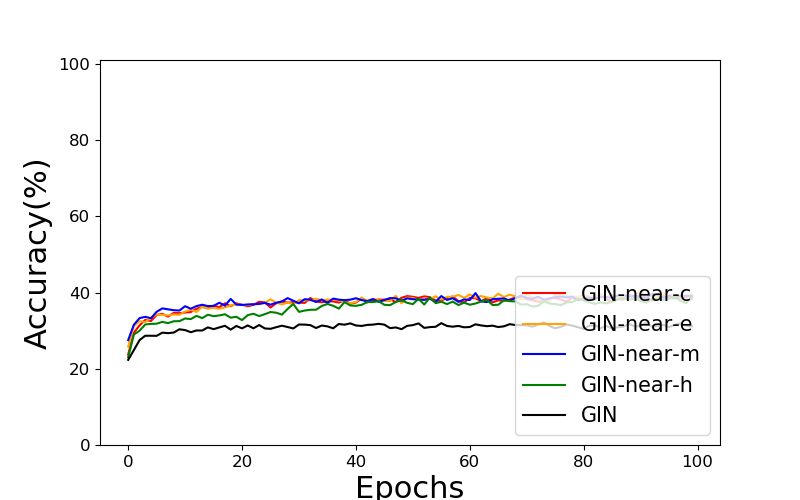}
  \includegraphics[width=0.485\linewidth]{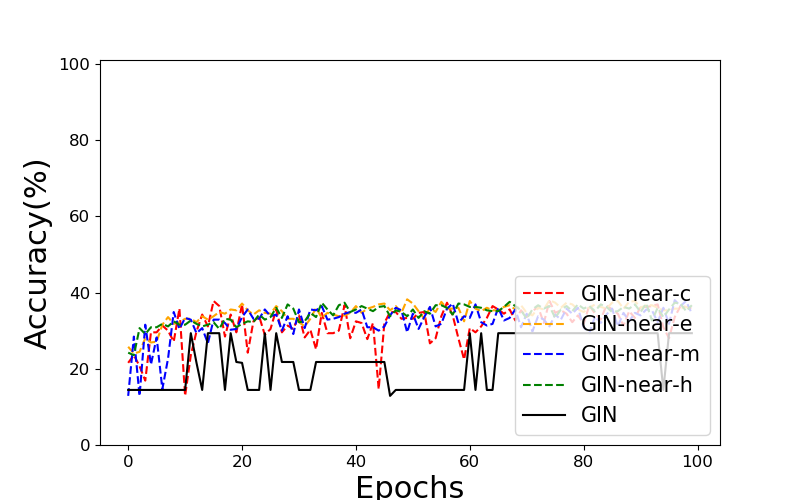}
  \\
  \includegraphics[width=0.485\linewidth]{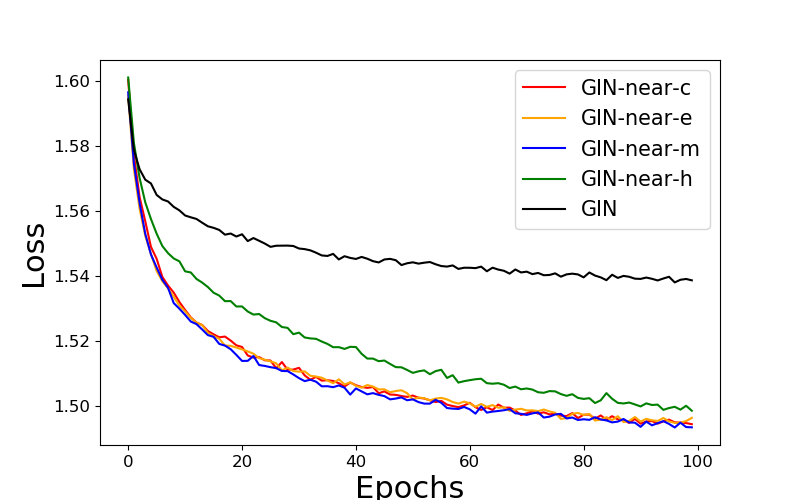}
  \includegraphics[width=0.485\linewidth]{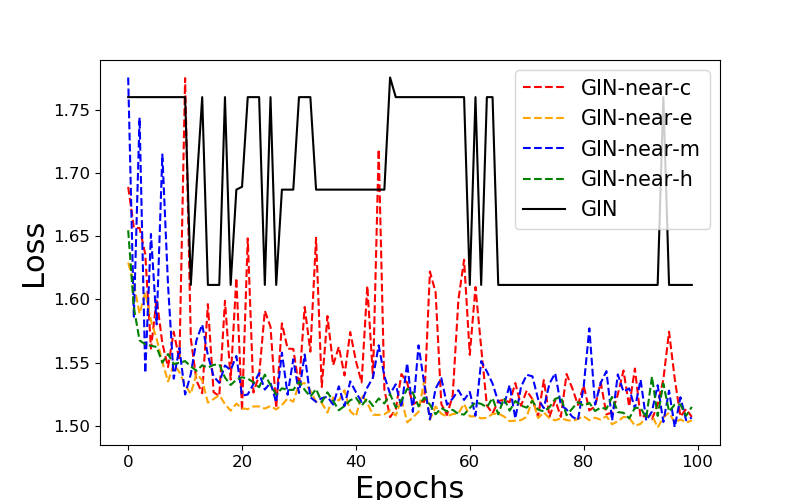}
  \\
  \includegraphics[width=0.485\linewidth]{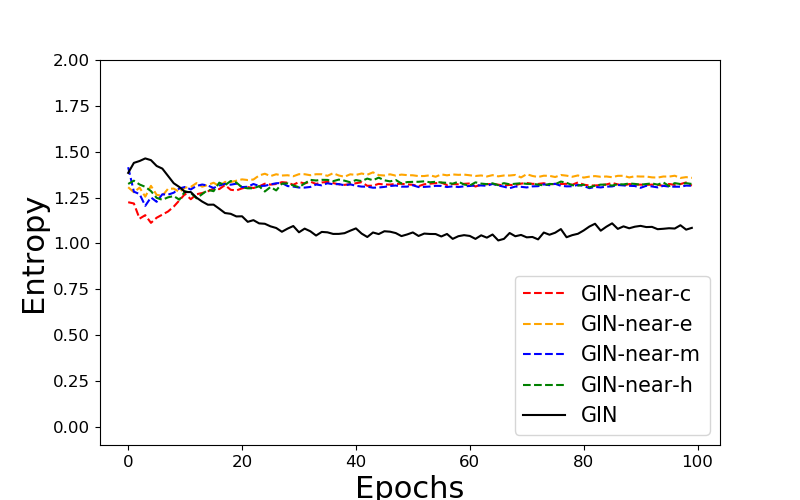}
  \includegraphics[width=0.485\linewidth]{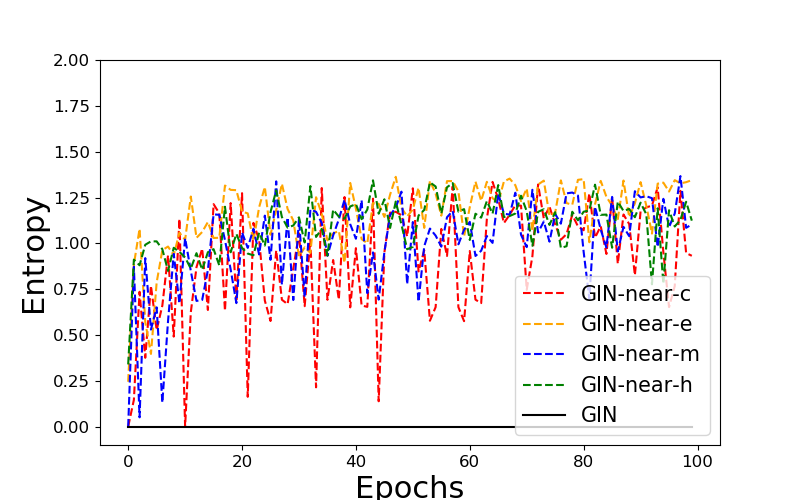}
  \\
  
  \caption{Accuracy, loss, and entropy of the predicted labels for training set (on the first column) and validation set (on the second column) of ARTFCY. GIN-0 is marked as black line, and GIN-NEAR-c/e/m/h are marked as dashed red/orange/blue/green lines.}
  \label{toyex31cy}
\end{figure*}

From the results on the toy examples with our GIN variants in Fig. \ref{toyex31cc} and Fig. \ref{toyex31cy}, we can observe that the training loss of GIN decreases slowly compared with the proposed NEAR variants, while other proposed NEAR variants were trained well. Moreover, the validation accuracy and validation loss of the plain GIN were not improved sufficiently well, whereas those of all the other NEAR variants got improved well. This implies that the plain GIN model predicted graphs only with $|V|$, which makes a step-function like behavior for the validation loss and validation accuracies. Besides, the entropy of predicted results on the validation set by GIN remains low, which shows that GIN predicts most of graphs in the validation set to be in the same label depending only on $|V|$. These results empirically show Theorem \ref{thm1} with sum READOUT function and ReLU activation. 

The graphs in these tasks have the same neighborhood sets with different local structures. Therefore, our proposed models also have the ability to catch differences between various graph structures. 

\subsection{Graph classification tasks}
\label{sec3.3:graphbenchmark}
Next, we perform graph classification tasks on eight real-world benchmark graph datasets, one synthetic benchmark dataset, and our two toy-example datasets. Four variants of NEAR (NEAR-c/e/m/h) are used as our proposed models in this experiment. Results of the recent algorithms based on GNNs and graph kernels are compared with ours. We executed all GNN-based models in our experiments.

\subsubsection{Datasets and features}
We use 6 chemistry/bioinformatics datasets (COX2, MUTAG, PTCMR, PROTEINS, NCI1, FRANKENSTEIN), 2 social-network datasets (IMDB-BINARY, IMDB-MULTI), and 1 synthetic dataset TRIANGLES as benchmark datasets. TRIANGLES is a dataset proposed by \cite{knyazev2019understanding} which is related with a number of triangles in the graphs. Discrete node labels were encoded into one-hot vectors, and continuous node attributes were used without preprocessing. If there are no node labels or attributes, we generated constant dummy labels or attributes for all nodes. Degree was one-hot encoded to a 500-dimensional unit vector and inserted as an additional node label. While preprocessing the degree, a node with degree 500 and a node with degree larger than 500 will mapped to the same one-hot vector.

Table \ref{dataset_info} shows descriptions and statistics for 9 graph benchmark datasets and 2 proposed datasets which are used in our experiments. The category of each dataset, number of graphs and classes, average number of nodes and edges are provided. If node features exist in the dataset, we use '+', otherwise '-'. If continuous node attributes exist in the dataset, their dimension is noted in parenthesis.

\begin{table}[htb]
  \caption{Detailed Data Statistics and Information}
  \label{dataset_info}
  \centering
  \begin{tabular}{llllrrrrl}
    \toprule
    \multicolumn{2}{c}{Dataset} &\multicolumn{2}{c}{Features} &\multicolumn{4}{c}{Statistics} \\
    \cmidrule(r){1-2}
    \cmidrule(r){3-4}
    \cmidrule(r){5-8}
    & Cate & \multicolumn{2}{c}{Node} & \multicolumn{2}{c}{Num. of} & \multicolumn{2}{c}{Average Num. of} \\
    Name & -gory & Label & Attr. & Graphs & Classes & Nodes & Edges \\
    \midrule
    COX2 & Bio & + & +(3) & 467 & 2 & 41.22 & 43.45 \\
    PROT & Bio & + & +(1) & 1113 & 2 & 39.06 & 72.82 \\
    FRANK & Bio & + & +(780) & 4337 & 2 & 16.90 & 17.88 \\
    MUTAG & Bio & + & $-$ & 188 & 2 & 17,93 & 19.79 \\
    PTC-MR & Bio & + & $-$ & 344 & 2 & 14.29 & 14.69 \\
    NCI1 & Bio & + & $-$ & 4110 & 2 & 29.87 & 32.30 \\
    IMDB-B & Soc & $-$ & $-$ & 1000 & 2 & 19.77 & 96.53 \\
    IMDB-M & Soc & $-$ & $-$ & 1500 & 3 & 13.00 & 65.94  \\
    TRIANG & Syn & $-$ & +(1) & 45000 & 10 & 20.85 & 32.74  \\
    \midrule
    ARTFCC & Syn & + & $-$ & 5000 & 5 & 26.78 & 33.48 \\
    ARTFCY & Syn & + & $-$ & 5000 & 5 & 26.78 & 33.48 \\    
    \bottomrule
  \end{tabular}
\end{table}

\subsubsection{Model settings}
We selected the basline algorithms as following: Graph Isomorphism Network (GIN) \citep{xu2018how}, Deep Graph Convolutional Neural Networks (DGCNN) \citep{dgcnn}, PATCHY-SAN (PSCN) \citep{patchysan}, and Random Walk Graph Neural Network (RWNN) \citep{DBLP:conf/nips/NikolentzosV20} were used as GNN-based baseline algorithms, and Return probability-based Graph Kernel (RGK) \citep{retgk}, Weisfeiler-Lehman subtree kernel (WL) \citep{shervashidze2011weisfeiler}, and Graph Neural Tangent Kernel (GNTK) \citep{DBLP:conf/nips/DuHSPWX19} are used as kernel-based baseline algorithms.

We trained 100 epochs for GIN, PSCN and NEAR variants with learning rate $10^{-2}$. PSCN with $k=10$ was used in our experiment. We used the model setting in Section 3.1 on the baseline GIN-0 and NEAR variants. Following the basic settings that are suggested in the official PyTorch implementation of DGCNN \citep{dgcnn}, we executed 500 epochs with learning rate $10^{-4}$ if the best parameter setting is not specified. For the kernel-based baselines, we tuned $C\in \{10^{-3},10^{-2},10^{-1},1,10^{1},10^{2},10^{3}\}$ in SVM classifiers. The number of iterations for WL-subtree was set to 4.

We followed the hyperparameters and detailed learning process in an official PyTorch implementation of GNTK and RWNN. \citep{DBLP:conf/nips/DuHSPWX19, DBLP:conf/nips/NikolentzosV20} For the datasets that are not mentioned in GNTK paper, we followed the setting of IMDB-BINARY, with the number of layers $\in\{2, 4\}$. For the datasets that are not mentioned in RWNN paper \citep{DBLP:conf/nips/NikolentzosV20}, we followed the hyperparemeter setting upon the category (social/bio) of the benchmark dataset. We also ran the experiment on synthetic datasets (TRIANGLES, ARTFCC, ARTFCY) based on the experiment setting for social network dataset of \citep{DBLP:conf/nips/NikolentzosV20}. We reported the results of $p$-step RWNN variants ($p$-step RWNN) with random walk parameter $p=1,2,3$.

We performed 10 times of 10-fold cross-validation. For TRIANGLE dataset, we chose hyperparmeters based on the fixed train/validation/test split (First 30000 graphs for training, next 5000 graphs for validation, last 10000 graphs for testing) as designed in \cite{knyazev2019understanding}.

\subsubsection{Results}
We reported the average of test accuracies and their standard deviations with our proposed algorithms in Table \ref{sample-table-bio} and Table \ref{sample-table-soc-artf}. NEAR, GIN, DGCNN, PSCN, RWNN variants, GNTK, and WL were executed for all datasets with the same splits, and RetGK was executed on TRIANGLES and our two toy example datasets. Available results of RetGK were reported directly. GNTK failed to generate a gram matrix for TRIANGLES dataset, so the test accuracy was not available and this is denoted as out of memory (OOM) in Table \ref{sample-table-soc-artf}. We highlighted the best results for each benchmark dataset in Table \ref{sample-table-bio} and Table \ref{sample-table-soc-artf}.

\begin{table*}[htb]
  \caption{Graph classification results for 6 bio-informatics benchmark datasets}
  \label{sample-table-bio}
  \centering
  \begin{tabular}{lllllll}
    \toprule
    & COX2 & PROT & FRANK & MUTAG & PTCMR & NCI1 \\
    \midrule
    WL & 81.3 & 75.1 & 74.2 & 85.8 & 61.8 & 84.3 \\
    RGK & 81.4 & \textbf{78.0} & \textbf{76.7} & \textbf{90.3} & 62.5 & \textbf{84.5} \\ 
    GNTK & \textbf{82.3} & 74.1 & 62.1 & 87.6 & \textbf{62.7} & 83.3 \\ 
    \midrule
    DGCNN & 78.2(6.8) & 72.9(4.4) & 68.7(2.1) & 83.8(7.7) & 56.3(8.0) & 71.8(2.6) \\
    PSCN & 74.0(6.5) & 67.3(4.2) & 57.7(2.1) & 82.7(7.1) & 59.4(9.6) & 70.0(2.3) \\
    1-step RWNN & 81.9(5.4) & 74.0(4.5) & 63.1(2.6) & 83.7(8.1) & 57.8(8.4) & 67.1(2.7) \\ 
    2-step RWNN & 81.4(5.9) & 74.6(4.3) & 68.1(2.6) & 87.0(6.7) & 57.9(8.5) & 71.3(2.4) \\
    3-step RWNN & 82.0(5.9) & 74.4(4.3) & 67.9(2.3) & 87.1(7.6) & 58.0(9.5) & 74.6(2.3) \\
    \midrule
    GIN-0 & 80.3(6.9) & 73.8(3.9) & 68.6(2.3) & 84.2(7.9) & 56.6(7.9) & 80.7(2.0) \\
    NEAR-c & 81.2(6.8) & 74.0(4.0) & 70.2(2.0) & 85.0(8.5) & 56.9(8.0) & 81.0(1.9) \\
    NEAR-e & 80.8(6.4) & 74.7(4.4) &  70.6(2.1) & 85.6(8.3) & 57.8(8.0) & 81.0(2.0) \\
    NEAR-m & 80.9(6.6) & 74.3(4.0) &  70.8(2.2) & 85.1(8.3) & 57.6(8.1) & 80.8(1.8) \\ 
    NEAR-h & 82.0(6.1) & 75.4(4.1) &  70.2(2.4) & 85.7(8.6) & 57.3(9.7) & 80.9(1.9) \\
    \bottomrule
  \end{tabular}
\end{table*}

\begin{table*}[htb]
  \caption{Graph classification results for 2 social-network benchmark datasets and 3 synthetic benchmark datasets}
  \label{sample-table-soc-artf}
  \centering
  \begin{tabular}{lllllll}
    \toprule
    & IMDB-B & IMDB-M & TRIANG & ARTFCC & ARTFCY \\
    \midrule
    WL & 71.5 & \textbf{51.9} & 45.3 & 32.9 & 28.3 \\
    RGK & 72.3 & 48.7 & 59.1 & \textbf{99.0} & 47.0 \\ 
    GNTK & 73.6 & 50.9 & OOM & 64.1 & \textbf{69.7} \\ 
    \midrule
    DGCNN & 67.8(5.1) & 44.0(4.8) & 48.5(3.0) & 42.0(2.4) & 35.1(1.9) \\
    PSCN & 65.0(5.4) & 44.9(3.5) & 59.2(3.6) & 98.0(0.6) & 38.3(2.1) \\
    1-step RWNN & 70.6(4.7) & 47.5(4.3) & \textbf{99.8(0.5)} & 35.4(2.0) & 35.5(1.9) \\ 
    2-step RWNN & 71.0(5.1) & 48.0(4.1) & 97.5(2.3) & 35.3(1.9) & 35.4(2.0) \\
    3-step RWNN & 70.9(4.7) & 47.8(4.1) & 93.8(4.3) & 35.4(2.3) & 35.4(2.0) \\
    \midrule
    GIN-0 & 72.8(4.3) & 51.0(3.8) & 75.2(2.6) & 32.2(2.9) & 27.1(3.4) \\
    NEAR-c & 72.9(4.6) & 51.1(4.3) & 78.7(2.9) & 97.6(0.9) & 39.7(2.0) \\
    NEAR-e & 72.9(4.3) & 51.2(4.0) & 78.2(4.2) & 97.9(0.6) & 40.4(2.1) \\
    NEAR-m & \textbf{73.7(4.3)} & 50.8(3.8) & 79.7(3.2) & 97.9(0.7) & 40.2(2.1) \\
    NEAR-h & 73.5(4.4) & 50.8(3.9) & 80.5(3.5) & 97.8(0.7) & 40.2(2.4) \\
    \bottomrule
  \end{tabular}
\end{table*}

Although most of the state-of-the-art results were attained from the kernel-based methods, our proposed model NEAR achieved the compatible results from the GNN-based baselines, and slightly improved the benchmark results of previous GIN-0 model. We remark that due to the small size of the dataset, a high variance is observed among the test accuracies. As a consequence,  it is hard to statistically claim that the result on the real-world benchmark datasets is significantly improved by NEAR. However, a direct comparison with GIN-0 and our NEAR variants shows that our neighborhood-relation encoding indeed improved the model capacity of GIN-0. Our combined model with GIN-0 and NEAR improved the results for comparable datasets, especially for synthetic datasets which have a high demand for catching a detailed difference in 1-hop neighborhood. Thus, we can conclude that the edge-aggregating framework can be orthogonally combined with GIN and increase the capacity of the model.

\section{Conclusion}
We proposed NEAR, a new GNN framework that aggregates edges in the neighborhood and enables to encode the local structures to hidden vectors. We constructed a family of graphs with the same neighborhoods and distribution of labels but with different local structures. By using the proposed edge-aggregating framework with GIN models, we showed that NEAR has the ability to encode local structures and we obtained exemplary results for several graph classification tasks. Our proposed algorithm NEAR shows a better model capacity to deal with both local structures of graphs and node labels/attributes. Possible future work would be finding a more powerful and computationally efficient function $g(h_u, h_z)$ and a neighborhood edge multiset function $\phi$ that can represent connections in a neighborhood. Additionally, encoding edge labels/attributes with NEAR and GNN layers would be fruitful for more complex graph classification tasks and graph embedding.

\section*{Acknowledgement}
This work was supported by the Basic Science Research Program through the National Research Foundation of Korea NRF-2017R1E1A1A03070105 and NRF-2019R1A5A1028324 and by Institute for Information \& communications Technology Promotion(IITP) grant funded by the Korea government(MSIP) (2019-0-01906, Artificial Intelligence Graduate School Program(POSTECH)) and by the ITRC (Information Technology Research Center) support program (IITP-2018-0-01441).

\bibliographystyle{plain}
\bibliography{ijcai20_near}
\end{document}